\newtheorem{proposition}{Proposition}
\newtheorem{definition}{Definition}
\newtheorem{example}{Example}
\newenvironment{Proof}{\noindent{\em Proof.~}}{\hfill$\Box$\\[-0.15cm]}
\newcommand{\todoF}[2]{}
\newcommand{\fml}[1]{{\mathcal{#1}}}
\newcommand{\tn}[1]{\textnormal{#1}}
\newcommand{\mbf}[1]{\ensuremath\mathbf{#1}}
\newcommand{\mbb}[1]{\ensuremath\mathbb{#1}}
\newcommand{\Prob}{\ensuremath\tn{Pr}}
\newcommand{\prob}{\Prob}
\newcommand{\mindrs}{Min-$\delta$-Relevant-Set\xspace}
\newcommand{\mincdrs}{Min-C$\delta$-Relevant-Set\xspace}
\newcommand{\minidrs}{Min-I$\delta$-Relevant-Set\xspace}
\newcommand{\bmincdrs}{Min-C$\pmb{\delta}$-Relevant-Set\xspace}
\newcommand{\bminidrs}{Min-I$\pmb{\delta}$-Relevant-Set\xspace}
\newcommand{\majsat}{\ensuremath\tn{MajSAT}}
\newcommand{\ppc}{\ensuremath\tn{PP}}
\newcommand{\nppp}{\ensuremath\tn{NP}^{\tn{PP}}}
\definecolor{gray}{rgb}{.4,.4,.4}
\definecolor{midgrey}{rgb}{0.5,0.5,0.5}
\definecolor{middarkgrey}{rgb}{0.35,0.35,0.35}
\definecolor{darkgrey}{rgb}{0.3,0.3,0.3}
\definecolor{darkred}{rgb}{0.7,0.1,0.1}
\definecolor{midblue}{rgb}{0.2,0.2,0.7}
\definecolor{darkblue}{rgb}{0.1,0.1,0.5}
\definecolor{defseagreen}{cmyk}{0.69,0,0.50,0}
\newcommand{\jnoteF}[1]{}
\newcounter{Comment}[Comment]
\DeclareMathOperator*{\limply}{\rightarrow}
\definecolor{gray}{rgb}{.4,.4,.4}
\definecolor{midgrey}{rgb}{0.5,0.5,0.5}
\definecolor{middarkgrey}{rgb}{0.35,0.35,0.35}
\definecolor{darkgrey}{rgb}{0.3,0.3,0.3}
\definecolor{darkred}{rgb}{0.7,0.1,0.1}
\definecolor{midblue}{rgb}{0.2,0.2,0.7}
\definecolor{darkblue}{rgb}{0.1,0.1,0.5}
\definecolor{darkgreen}{rgb}{0.1,0.5,0.1}
\definecolor{defseagreen}{cmyk}{0.69,0,0.50,0}
\tikzset{
  0 my edge/.style={densely dashed, my edge},
  my edge/.style={-{Stealth[]}},
}
\setlist{nolistsep}
\titlespacing{\section}{0pt}{*1.725}{*1.0}
\titlespacing{\subsection}{0pt}{*1.0}{*0.75}
\titlespacing{\subsubsection}{0pt}{*1.0}{*0.25}
\titlespacing{\paragraph}{0pt}{*0.2125}{1em}
\setlist{nosep}
\newcommand{\axp}{\ensuremath\mathsf{findIDRS}}
\title{Efficient Explanations With Relevant Sets}
\author{%
  Yacine Izza\\
  Universit\'{e} de Toulouse, Toulouse, France\\
  \texttt{yacine.izza@univ-toulouse.fr}
  \And
  Alexey Ignatiev\\
  Monash University, Melbourne, Australia\\
  \texttt{alexey.ignatiev@monash.edu}
  \And
  Nina Narodytska\\
  VMware Research, CA, USA\\
  \texttt{nnarodytska@vmware.com}\\
  \And
  Martin C. Cooper\\
  Universit\'{e} Paul Sabatier, IRIT, Toulouse, France\\
  \texttt{martin.cooper@irit.fr}
  \And
  Joao Marques-Silva\\
  IRIT, CNRS, Toulouse, France\\
  \texttt{joao.marques-silva@irit.fr}
  %
}
\begin{document}

\maketitle

\begin{abstract}
  Recent work proposed $\delta$-relevant inputs (or sets) as a
  probabilistic explanation for the predictions made by a classifier
  on a given input.
  $\delta$-relevant sets are significant because they serve to relate
  (model-agnostic) Anchors with (model-accurate) PI-explanations,
  among other explanation approaches. 
  Unfortunately, the computation of smallest size $\delta$-relevant
  sets is complete for $\nppp$, rendering their computation largely
  infeasible in practice.
  This paper investigates solutions for tackling the practical
  limitations of $\delta$-relevant sets.
  First, the paper alternatively considers the computation of
  subset-minimal sets.
  Second, the paper studies concrete families of classifiers,
  including decision trees among others.
  For these cases, the paper shows that the computation of
  subset-minimal $\delta$-relevant sets is in NP, and can be solved
  with a polynomial number of calls to an NP oracle.
  %
  %
  %
  The experimental evaluation compares the proposed approach with
  heuristic explainers for the concrete case of the classifiers
  studied in the paper, and confirms the advantage of the proposed
  solution over the state of the art.
\end{abstract}

\section{Introduction}
\label{sec:intro}

Recent work proposed $\delta$-relevant inputs (or
sets)~\cite{kutyniok-jair21}, which represent probabilistic
explanations for the predictions made by a classifier given some
input.
$\delta$-relevant sets were shown to generalize both
Anchors~\cite{guestrin-aaai18} and
PI-explanations~\cite{darwiche-ijcai18}, thus revealing a connection
between model-agnostic explanations (e.g.\ Anchors) and model-accurate
explanations (e.g.\ PI-explanations).
Moreover, $\delta$-relevant sets offer a natural solution for
increasing the interpretability of PI-explanations, at the cost of
obtaining intuitive explanations that hold in most, but not all,
points in feature space.
A formidable downside of $\delta$-relevant sets is that their
computation is hard for $\nppp$. This signifies that for most
practical examples, the time for computing minimum $\delta$-relevant
sets will be prohibitive in practice.
To address the computational complexity of finding minimum
$\delta$-relevant sets, a number of solutions can be envisioned.
A first solution is the approximate computation of $\delta$-relevant 
sets. However, for this solution, the formal guarantees offered by
$\delta$-relevant sets may no longer hold.
A second solution is to identify which ML models allow for the
efficient computation of $\delta$-relevant sets.
%
Finally, a third solution is to investigate possible ways of relaxing
the original definition of $\delta$-relevant
sets~\cite{kutyniok-jair21}.

This paper addresses the second and third solutions listed above.
First, the paper proposes alternative definitions of $\delta$-relevant
sets. Second, the paper analyzes the computation of (different
variants of) $\delta$-relevant sets in the case of decision trees
(DTs).


Although generally regarded as
interpretable~\cite{breiman-ss01,rudin-naturemi19,molnar-bk20}, recent
work showed that DTs can exhibit \emph{explanation
  redundancy}~\cite{barcelo-nips20,iims-corr20}, i.e.\ there exist DTs
containing paths that are (possibly arbitrarily) longer than a
PI-explanation~\cite{darwiche-ijcai18}. Furthermore, existing
experimental evidence confirms that explanation redundancy is observed
in DTs learned with state of the art DT learners~\cite{iims-corr20}.
Thus, even in the case of DTs, the computation of $\delta$-relevant
sets is of interest when the goal is to improve the interpretability
of ML models.

The main results of the paper can thus be summarized as follows.
First, the paper shows that, for the decision version of computing a
minimum size $\delta$-relevant set (i.e.\ the problem studied in
recent work~\cite{kutyniok-jair21}), is in NP in the case of DTs.
The proof of this result offers a solution for computing a
minimum-size $\delta$-relevant set, in the case of DTs, by using
maximum satisfiability modulo theories
(MaxSMT)~\cite{barrett-hmc18,bjorner-tacas15}.
Second, the paper shows that, in the case of DTs, a relaxed definition
of $\delta$-relevant set enables the computation of (relaxed)
subset-minimal $\delta$-relevant sets in polynomial time.
Third, the paper shows that ML models based on knowledge compilation
(KC) languages~\cite{darwiche-jair02}, including those studied in
recent papers on XAI for KC
languages~\cite{darwiche-ijcai18,darwiche-aaai19,darwiche-ecai20,marquis-kr20,marquis-corr21},
the computation of (relaxed) subset-minimal $\delta$-relevant sets is
also in polynomial time. 
Fourth, the paper shows that recently proposed duality results for
explanations~\cite{inms-nips19,inams-aiia20}, which in practice enable
the enumeration of explanations, can be extended to the setting of
$\delta$-relevant sets.


\paragraph{Related work.}
The growing adoption of ML in different settings motivates the recent
interest in
explainability~\cite{muller-dsp18,guidotti-acmcs19,xai-bk19,lipton-cacm18,weld-cacm19,monroe-cacm21}.
Well-known approaches for explaining ML-models are model-agnostic and
based on heuristic
solutions~\cite{guestrin-kdd16,lundberg-nips17,guestrin-aaai18}. These
approaches offer no formal guarantees of
rigor, and practical limitations have been reported in recent
years~\cite{lukasiewicz-corr19,nsmims-sat19,lakkaraju-aies20a,lakkaraju-aies20b}.
%
More recently, model-accurate non-heuristic approaches to
explainability have been
investigated~\cite{darwiche-ijcai18,inms-nips19,darwiche-ecai20,inams-aiia20,marquis-kr20,marquis-corr21,kwiatkowska-ijcai21}.
These non-heuristic approaches are characterized by formal guarantees
of rigor, e.g.\ explanations are valid in any point in feature space.
Unfortunately, non-heuristic methods also exhibit a number of drawbacks,
including for example scalability, explanation size, and the inability
to compute explanations with probabilistic guarantees.
Recent work~\cite{kutyniok-jair21} revealed ways of relating heuristic
and non-heuristic explanations. Our paper builds on this recent work.

\paragraph{Organization.}
The paper is organized as follows.
\autoref{sec:prelim} introduces the notation and definitions used in
the rest of the paper.
\autoref{sec:drs} discusses $\delta$-relevant sets and a number of
alternative definitions.
\autoref{sec:dual} delves into duality between different kinds of
explanations when $\delta$-relevant sets are considered.
\autoref{sec:rsdt} discusses the computation of cardinality-minimal
and subset-minimal $\delta$-relevant sets in the case of decision
trees.
\autoref{sec:res} presents experimental results for computing
$\delta$-relevant sets in the case of DTs.
Finally, \autoref{sec:conc} concludes the paper.

\section{Preliminaries}
\label{sec:prelim}

\paragraph{Classification problems \& formal explanations.}
%

This paper considers classification problems, which are defined on a
set of features (or attributes) $\fml{F}=\{1,\ldots,m\}$ and a set of
classes $\fml{K}=\{c_1,c_2,\ldots,c_K\}$.
Each feature $i\in\fml{F}$ takes values from a domain $\mbb{D}_i$.
In general, domains can be boolean, integer or real-valued.
%
%
Feature space is defined as
$\mbb{F}=\mbb{D}_1\times{\mbb{D}_2}\times\ldots\times{\mbb{D}_m}=\{0,1\}^{m}$.
For boolean domains, $\mbb{D}_i=\{0,1\}=\mbb{B}$, $i=1,\ldots,m$, and
$\mbb{F}=\mbb{B}^{m}$.
The notation $\mbf{x}=(x_1,\ldots,x_m)$ denotes an arbitrary point in
feature space, where each $x_i$ is a variable taking values from
$\mbb{D}_i$. The set of variables associated with features is
$X=\{x_1,\ldots,x_m\}$.
Moreover, the notation $\mbf{v}=(v_1,\ldots,v_m)$ represents a
specific point in feature space, where each $v_i$ is a constant
representing one concrete value from $\mbb{D}_i=\{0,1\}$.
An \emph{instance} (or example) denotes a pair $(\mbf{v}, c)$, where
$\mbf{v}\in\mbb{F}$ and $c\in\fml{K}$. (We also use the term
\emph{instance} to refer to $\mbf{v}$, leaving $c$ implicit.)
An ML classifier $\mbb{M}$ is characterized by a \emph{classification
function} $\kappa$ that maps feature space $\mbb{F}$ into the set of
classes $\fml{K}$, i.e.\ $\kappa:\mbb{F}\to\fml{K}$.

%
We now define formal explanations.
Prime implicant (PI) explanations~\cite{darwiche-ijcai18} denote a
minimal set of literals (relating a feature value $x_i$ and a constant
$v_i\in\mbb{D}_i$ 
that are sufficient for the prediction\footnote{%
PI-explanations are related with abduction, and so are also referred
to as abductive explanations (AXp)~\cite{inms-aaai19}. More recently,
PI-explanations have been studied from a knowledge compilation
perspective~\cite{marquis-kr20,marquis-corr21}, but also in terms of
their computational complexity~\cite{barcelo-nips20}.}.
Formally, given $\mbf{v}=(v_1,\ldots,v_m)\in\mbb{F}$ with
$\kappa(\mbf{v})=c$, a PI-explanation (AXp) is any minimal subset
$\fml{X}\subseteq\fml{F}$ such that,
\begin{equation} \label{eq:axp}
  \forall(\mbf{x}\in\mbb{F}).
  \left[
    \bigwedge\nolimits_{i\in{\fml{X}}}(x_i=v_i)
    \right]
  \limply(\kappa(\mbf{x})=c)
\end{equation}
%
AXps can be viewed as answering a `Why?' question, i.e.\ why is some
prediction made given some point in feature space. A different view of
explanations is a contrastive explanation~\cite{miller-aij19}, which
answers a `Why Not?' question, i.e.\ which features can be changed to
change the prediction. A formal definition of contrastive explanation
is proposed in recent work~\cite{inams-aiia20}.
Given $\mbf{v}=(v_1,\ldots,v_m)\in\mbb{F}$ with $\kappa(\mbf{v})=c$, a
CXp is any minimal subset $\fml{Y}\subseteq\fml{F}$ such that,
\begin{equation} \label{eq:cxp}
  \exists(\mbf{x}\in\mbb{F}).\bigwedge\nolimits_{j\in\fml{F}\setminus\fml{Y}}(x_j=v_j)\land(\kappa(\mbf{x})\not=c) 
\end{equation}
Building on the results of R.~Reiter in model-based
diagnosis~\cite{reiter-aij87},~\cite{inams-aiia20} proves a minimal
hitting set (MHS, or hypergraph transversal~\cite{berge-bk84}) duality
relation between AXps and CXps, i.e.\ AXps are MHSes of CXps and
vice-versa.
Throughout the paper, $\tn{(M)HS}(\mbb{Z})$ 
denote the set of (minimal) hitting sets of $\mbb{Z}$.

\paragraph{Relevant sets.}
%
$\delta$-relevant sets were recently proposed~\cite{kutyniok-jair21}
as a formalization of explanation that enables relating different
types of explanation~\cite{kutyniok-jair21}. We briefly overview the
definition of relevant set and associated definitions.
%
%
The assumptions regarding the probabilities of logical propositions
are those made in earlier work~\cite{kutyniok-jair21}.
Let $\prob_{\mbf{x}}(A(\mbf{x}))$ denote the probability of some
proposition $A$ defined on the vector of variables
$\mbf{x}=(x_1,\ldots,x_m)$, i.e.
\begin{equation} \label{eq:pdefs}
  \begin{array}{ccc}
    \prob_{\mbf{x}}(A(\mbf{x})) =
    \frac{|\{\mbf{x}\in\mbb{F}:A(\mbf{x})=1\}|}{|\{\mbf{x}\in\mbb{F}\}|},
    & \quad &
    \prob_{\mbf{x}}(A(\mbf{x})\,|\,B(\mbf{x})) =
    \frac{|\{\mbf{x}\in\mbb{F}:A(\mbf{x})=1\land{B(\mbf{x})=1}\}|}{|\{\mbf{x}\in\mbb{F}:B(\mbf{x})=1)\}|} 
  \end{array}
\end{equation}

\begin{definition}[$\delta$-relevant set~\cite{kutyniok-jair21}]
  Let $\kappa:\mbb{B}^{m}\to\fml{K}=\mbb{B}$, $\mbf{v}\in\mbb{B}^m$,
  $\kappa(\mbf{v})=c\in\mbb{B}$, and
  $\delta\in[0,1]$. $\fml{S}\subseteq\fml{F}$ is a $\delta$-relevant
  set for $\kappa$ and $\mbf{v}$ if,
  \begin{equation} \label{eq:drs}
    \prob_{\mbf{x}}(\kappa(\mbf{x})=c\,|\,\mbf{x}_{\fml{S}}=\mbf{v}_{\fml{S}})\ge\delta
  \end{equation}
  (where the restriction of $\mbf{x}$ to the variables with indices in
  $\fml{S}$ is represented by
  $\mbf{x}_{\fml{S}}=(x_i)_{i\in\fml{S}}$).
\end{definition}
(Observe that
$\prob_{\mbf{x}}(\kappa(\mbf{x})=c\,|\,\mbf{x}_{\fml{S}}=\mbf{v}_{\fml{S}})$ 
is often referred to as the \emph{precision} of
$\fml{S}$~\cite{guestrin-aaai18,nsmims-sat19}.)
Thus, a $\delta$-relevant set represents a set of features which, if
fixed to some pre-defined value (taken from a reference vector
$\mbf{v}$) ensure that the probability of the prediction being the
same as the one for $\mbf{v}$ is no less than $\delta$. 

\begin{definition}[Min-$\delta$-relevant set] \label{def:mdrs}
  Given $\kappa$, $\mbf{v}\in\mbb{B}^{m}$, and $\delta\in[0,1]$, find
  the smallest $k$, such that there exists $\fml{S}\subseteq\fml{F}$, with
  $|\fml{S}|\le{k}$, and $\fml{S}$ is a $\delta$-relevant set for
  $\kappa$ and $\mbf{v}$.
\end{definition}
With the goal of proving the computational complexity of finding a
minimum-size set of features that is a $\delta$-relevant set, earlier
work~\cite{kutyniok-jair21} restricted the definition to the case
where $\kappa$ is represented as a boolean circuit.
(Boolean circuits were restricted to propositional formulas defined
using the operators $\lor$, $\land$ and $\neg$, and using a set of
variables representing the inputs; this explains the choice of
\emph{inputs} over \emph{sets} in earlier
work~\cite{kutyniok-jair21}.)
Observe that \autoref{def:mdrs} imposes no such restriction on the
representation of the classifier, i.e.\ the logical representation of
$\kappa$ need not be a boolean circuit.

\paragraph{Decision trees.}
A decision tree $\fml{T}$ is a directed acyclic graph having at most
one path between every pair of nodes. $\fml{T}$ has a root node,
characterized by having no incoming edges. All other nodes have one
incoming edge. We consider univariate decision trees where each
non-terminal node is associated with a single feature $x_i$.
Each edge is labeled with a literal, relating a feature (associated
with the edge's starting node) with some values (or range of values)
from the feature's domain. We will consider literals to be of the form
$x_i\in\mbb{E}_i$. $x_i$ is a variable that denotes the value taken
by feature $i$, whereas $\mbb{E}_i\subseteq\mbb{D}_i$ is a subset of
the domain of feature $i$.
The type of literals used to label the edges of a DT allows the
representation of the DTs generated by a wide range of decision tree
learners (e.g.~\cite{utgoff-ml97}).
It is assumed that for any $\mbf{v}\in\mbb{F}$ there exists
\emph{exactly} one path in $\fml{T}$ that is consistent with
$\mbf{v}$. By \emph{consistent} we mean that the literals associated
with the path are satisfied (or consistent) with the feature values in
$\mbf{v}$.
\\ 
%

\paragraph{Running example.}
Throughout the paper, we will consider the decision tree shown
in~\autoref{fig:01} as the running example\footnote{%
  Although the running example considers boolean features (with
  $\mbb{D}_i=\{0,1\}$) and boolean classification, similar conclusions
  would be obtained if we were to consider instead
  real-valued features, e.g.\ having $\mbb{D}_i=[0,1]$.}.

\begin{figure*}[t]
  \begin{tikzpicture}
  \node (a) at (8.0,3.175)
        {
          \scalebox{0.85}{
            \renewcommand{\arraystretch}{1.05}
            \renewcommand{\tabcolsep}{0.35em}
            {\footnotesize
              \begin{tabular}{c|c} \toprule
                \multicolumn{2}{c}{Domains} \\ \toprule
                Feature & Range  \\ \midrule
                $i=1,\ldots,9$ & $D_i=\{0,1\}$ \\ 
                \bottomrule
              \end{tabular}
            }
          }
        }
        ;
  \node (b) at (1.0,0)
        {
          \hspace*{-0.35cm}\scalebox{0.9}{\input{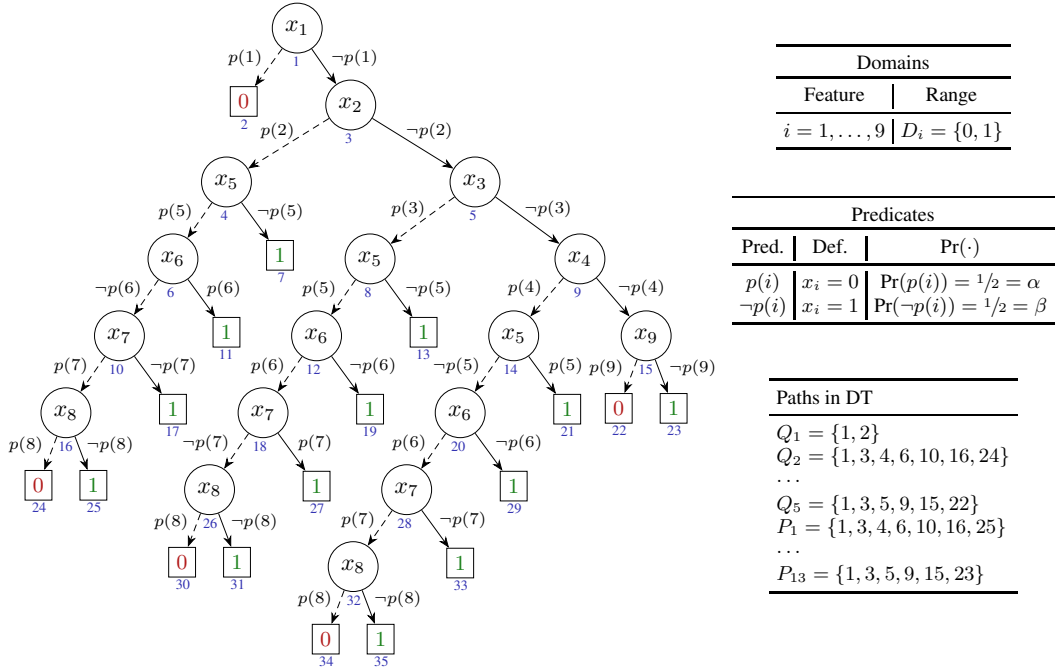}}
        }
        ;
  \node (c) at (8.0,1.0)
        {
          \scalebox{0.85}{
            \renewcommand{\arraystretch}{1.05}
            \renewcommand{\tabcolsep}{0.35em}
            {\footnotesize
              \begin{tabular}{c|c|c} \toprule
                \multicolumn{3}{c}{Predicates} \\ \toprule
                Pred. & Def. & $\Prob(\cdot)$ \\ \midrule
                $p(i)$ & $x_i=0$ & $\Prob(p(i))=\sfrac{1}{2}=\alpha$ \\
                $\neg{p(i)}$ & $x_i=1$ & $\Prob(\neg{p(i)})=\sfrac{1}{2}=\beta$ \\
                \bottomrule
              \end{tabular}
            }
          }
        }
        ;
  \node (d) at (8.0,-2.0)
        {
          \scalebox{0.85}{
            \renewcommand{\arraystretch}{1.05}
            \renewcommand{\tabcolsep}{0.35em}
            {\footnotesize
              \begin{tabular}{l} \toprule 
                Paths in DT
                \\ \midrule
                $Q_1=\{1,2\}$ \\ 
                $Q_2=\{1,3,4,6,10,16,24\}$ \\ 
                $\cdots$ \\
                $Q_5=\{1,3,5,9,15,22\}$ \\
                $P_1=\{1,3,4,6,10,16,25\}$ \\
                $\cdots$ \\
                $P_{13}=\{1,3,5,9,15,23\}$ \\
                \bottomrule
              \end{tabular}
            }
          }
        }
        ;
\end{tikzpicture}
  \caption{DT used as running example} \label{fig:01}
\end{figure*}

\begin{example} \label{ex:01}
  We consider the example DT from~\autoref{fig:01}. For this
  example and for simplicity, all features are binary with
  $D_i=\{0,1\}$.
  It is also assumed that $\Prob(x_i=0)=\Prob(x_i=1)=\sfrac{1}{2}$,
  which we represent respectively by $\alpha$ and $\beta$, to allow
  other values to be considered.
. Some of the paths in the DT are also shown.
  Moreover, let the instance be 
  $\mbf{v}=(v_1,v_2,v_3,v_4,v_5,v_6,v_7,v_8,v_9)=(1,1,1,1,0,0,0,0,1)$
  with prediction $c=1$. 
  Since $\mbf{v}$ is consistent with the path ending at node 23, by inspection, we can
  conclude that a possible explanation is $\fml{X}=\{1,2,3,4,9\}$,
  i.e.\ the features listed in the path. It can be shown that this
  corresponds to a PI-explanation.
\end{example}

\section{Complementary Definitions of Relevant Sets}
\label{sec:drs}

Given the prohibitive complexity of solving the \mindrs problem,
this section proposes alternative definitions of minimal relevant
sets, which are shown to yield efficient algorithms for some concrete 
ML models.
First, we consider subset-minimal relevant sets instead of
cardinality-minimal sets. However, we relax the restrictions that
features are boolean and the classification problem is restricted to
two classes.

\paragraph{\bmincdrs{s}.}
Following earlier work on PI-explanations~\cite{darwiche-ijcai18}, we
consider subset-minimal relevant sets.

\begin{definition}[C$\pmb{\delta}$-relevant set]
  Let $\kappa:\mbb{F}\to\fml{K}$, $\delta\in[0,1]$, 
  and instance $(\mbf{v}\in\mbb{F},c\in\fml{K})$.
  $\fml{S}\subseteq\fml{F}$ is a C$\delta$-relevant set for the
  classifier-instance pair, $\kappa$ and $(\mbf{v},c)$, if
  \eqref{eq:drs} holds.
  %
  %
\end{definition}

(The difference of C$\delta$ to plain $\delta$ relevant sets is that 
$\mbb{F}$ and $\fml{K}$ become unrestricted.)
As noted in earlier work, a (smallest) PI-explanation is a 1-relevant
set for a given pair $\kappa$ and $(\mbf{v},c)$. Furthermore, the main
difference with respect to Anchors~\cite{guestrin-aaai18} is the
assumptions made with respect to sampling. As noted in earlier
work~\cite{kutyniok-jair21}, $\delta$-relevant sets can be related
with different efforts for computing
explanations~\cite{guestrin-aaai18,darwiche-ijcai18,vandenbroeck-ijcai19}.

\begin{definition}[\bmincdrs]
  Let $\kappa:\mbb{F}\to\fml{K}$, $\delta\in[0,1]$, 
  and instance $(\mbf{v}\in\mbb{F},c\in\fml{K})$.
  A \mincdrs is a (subset-)minimal subset $\fml{S}\subseteq\fml{F}$
  that is C$\delta$-relevant for $\kappa$ and $(\mbf{v},c)$.
\end{definition}
(Observe that, in contrast with the definition of
\mindrs~\cite{kutyniok-jair21}, where the objective is to compute a
cardinality-minimal set, the objective of the definition of \mincdrs
it to compute a subset-minimal set.)
For the case where $\kappa$ is implemented as a boolean circuit
(propositional formula defined on the operators $\lor$, $\land$ and
$\neg$), \mindrs is hard for $\nppp$, with the decision problem in
$\nppp$~\cite{kutyniok-jair21}.
Although the complexity of \mincdrs is unknown, we conjecture that
it is similar to the one of \mindrs. Moreover, we have the following
result,
\begin{proposition} \label{prop:cdrs}
  Deciding whether a set $\fml{S}\in\fml{F}$ is a C$\delta$-relevant
  set is \tn{PP}-hard.
\end{proposition}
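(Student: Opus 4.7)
The plan is to reduce \majsat\ (the canonical \ppc-complete problem) directly to the decision problem \emph{``given $\kappa$, $(\mbf{v},c)$, $\delta$, and $\fml{S}\subseteq\fml{F}$, is $\fml{S}$ a C$\delta$-relevant set?''}. Because MajSAT asks whether at least half of the truth assignments to a propositional formula $\phi(y_1,\ldots,y_n)$ satisfy it, and the C$\delta$-relevance condition is itself a threshold on a counting probability over feature space, the reduction should be essentially immediate.

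Concretely, given an arbitrary \majsat\ instance $\phi$ on $n$ variables, I would construct a classification instance as follows. Set $\fml{F}=\{1,\ldots,n\}$, $\mbb{D}_i=\{0,1\}$ so that $\mbb{F}=\mbb{B}^n$, take $\fml{K}=\mbb{B}$, and let the classification function be $\kappa(\mbf{x})=\phi(\mbf{x})$, which is polynomially representable since $\phi$ is given explicitly. Pick any reference point, e.g.\ $\mbf{v}=\mbf{0}$, with predicted class $c=1$, choose threshold $\delta=\tfrac{1}{2}$, and finally set the candidate set to $\fml{S}=\emptyset$. Under the uniform distribution assumed in \eqref{eq:pdefs}, the conditional probability in \eqref{eq:drs} collapses to
\begin{equation*}
\prob_{\mbf{x}}(\kappa(\mbf{x})=1\,|\,\mbf{x}_{\emptyset}=\mbf{v}_{\emptyset})
= \prob_{\mbf{x}}(\phi(\mbf{x})=1)
= \frac{|\{\mbf{x}\in\mbb{B}^n : \phi(\mbf{x})=1\}|}{2^{n}}.
\end{equation*}
Hence $\fml{S}=\emptyset$ is a C$\tfrac{1}{2}$-relevant set for $\kappa$ and $(\mbf{v},c)$ if and only if $\phi$ is a \textbf{yes}-instance of \majsat. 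The construction is clearly polynomial in $|\phi|$, so this is a polynomial-time many-one reduction from \majsat, yielding \ppc-hardness of the decision problem.

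I do not expect a genuine obstacle here: the only subtle point is to confirm that the definition permits $\fml{S}=\emptyset$ (which it does, since an empty conjunction in the conditioning event of \eqref{eq:drs} is vacuously true and the probability then reduces to the unconditional $\prob_{\mbf{x}}(\kappa(\mbf{x})=c)$). If one prefers to avoid the empty set, the same argument goes through by padding $\kappa$ with a dummy feature $x_{n+1}$ that does not occur in $\phi$, taking $\fml{S}=\{n+1\}$ and $v_{n+1}=0$; the conditional probability is unchanged because $\phi$ is independent of $x_{n+1}$. Either variant establishes \autoref{prop:cdrs}.
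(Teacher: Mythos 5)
There is a genuine gap: your reduction ignores the requirement that the classifier--instance pair be a valid one, i.e.\ that $\kappa(\mbf{v})=c$. This condition is part of the base definition of $\delta$-relevant sets, and the C$\delta$ variant (per the paper's remark) only lifts the restrictions on $\mbb{F}$ and $\fml{K}$, not this one. With $\kappa=\phi$, $\mbf{v}=\mbf{0}$ and $c=1$, your constructed input is a legitimate instance only if $\phi(\mbf{0})=1$; when $\phi(\mbf{0})=0$ the input violates the problem's promise, and you cannot repair this inside a polynomial-time reduction by searching for a point classified $1$, since that is SAT. Your dummy-feature variant inherits the same defect, because the extra feature does not change the value of $\phi$ at the reference point. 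This is precisely the obstacle the paper's gadget is built to remove: it extends $f$ to $F(\mbf{x},\mbf{p})$ on two fresh variables $p_1,p_2$, with $F=1$ whenever $p_1=p_2=1$ and $F=f(\mbf{x})$ otherwise, so that the reference point $((0,\ldots,0),(1,1))$ is guaranteed to be classified $1$ regardless of $f$. Conditioning on $\fml{S}=\{p_1\}$ then gives $\prob_{\mbf{x},\mbf{p}}(F=1\mid p_1=1)=\tfrac{1}{2}+\tfrac{1}{2}\,\prob_{\mbf{x}}(f(\mbf{x})=1)$, and the threshold $\delta=0.75$ turns the C$\delta$-relevancy test into exactly the $\majsat$ question.

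Apart from that, your underlying idea (relevancy with a suitably chosen $\delta$ is a majority-count threshold, hence $\ppc$-hardness via $\majsat$) is the same as the paper's, and it would go through once the gadget is added. One further small point: the paper defines $\majsat$ with a strict majority while \eqref{eq:drs} uses $\ge\delta$, so with $\delta=\tfrac{1}{2}$ your equivalence fails on the tie case $|\{\mbf{x}:\phi(\mbf{x})=1\}|=2^{n-1}$; this is fixed by taking $\delta=\tfrac{1}{2}+2^{-n}$ (a granularity adjustment that the paper's own sketch also glosses over).
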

(The proof in included in the supplementary materials.)
It should be underlined that the high complexity of exactly solving
\mindrs (and \mincdrs) in the general case represents a key practical
limitation.
One additional difficulty with computing a subset-minimal
C$\delta$-relevant set is that its definition is
non-monotone. \eqref{eq:drs} can be written as follows,
\[
\prob_{\mbf{x}}(\kappa(\mbf{x})=c\,|\,\mbf{x}_{\fml{S}}=\mbf{v}_{\fml{S}})
=
\frac{%
  |\{\mbf{x}\in\mbb{F}:\kappa(\mbf{x})=c\land(\mbf{x}_{\fml{S}}=\mbf{v}_{\fml{S}})\}|
}{%
  |\{\mbf{x}\in\mbb{F}:(\mbf{x}_{\fml{S}}=\mbf{v}_{\fml{S}})\}|
}
\]
As the size of set $\fml{S}$ is reduced (e.g.\ as we search for a
minimal set), both the numerator and the denominator can change.
Hence, the value of
$\prob_{\mbf{x}}(\kappa(\mbf{x})=c\,|\,\mbf{x}_{\fml{S}}=\mbf{v}_{\fml{S}})$ 
is not guaranteed to shrink, and in some settings this value can grow.

\paragraph{\bminidrs{s}.}
We show later in the paper that, by considering the probability of the
conjunction of two events instead of the conditional probability, the
resulting monotone definition of relevant set enables more efficient
computation of subset-minimal relevant sets.
One has four possible outcomes when considering two events. In our
case that means we can have:
$[\kappa(\mbf{x})=\kappa(\mbf{v}),\mbf{x}_{\fml{S}}=\mbf{v}_{\fml{S}}]$,
$[\kappa(\mbf{x})=\kappa(\mbf{v}),\mbf{x}_{\fml{S}}\not=\mbf{v}_{\fml{S}}]$,
$[\kappa(\mbf{x})\not=\kappa(\mbf{v}),\mbf{x}_{\fml{S}}=\mbf{v}_{\fml{S}}]$,
and
$[\kappa(\mbf{x})\not=\kappa(\mbf{v}),\mbf{x}_{\fml{S}}\not=\mbf{v}_{\fml{S}}]$.
We are interest in picking sets $\fml{S}$ that minimize the odds of
picking an assignment consistent with $\fml{S}$ and obtaining a
different prediction. Hence, our concern will be to identify sets
$\fml{S}$ that \emph{minimize}
$\Prob(\kappa(\mbf{x})\not=\kappa(\mbf{v}),\mbf{x}_{\fml{S}}=\mbf{v}_{\fml{S}})$.

\begin{definition}[I$\pmb{\delta}$-relevant set]
  Let $\kappa:\mbb{F}\to\fml{K}$, $\delta\in[0,1]$, 
  and instance $(\mbf{v}\in\mbb{F},c\in\fml{K})$.
  $\fml{S}\subseteq\fml{F}$ is a I$\delta$-relevant set for the
  classifier-instance pair, $\kappa$ and $(\mbf{v},c)$, if
  \eqref{eq:idr} holds:
  \begin{equation} \label{eq:idr}
    \prob_{\mbf{x}}(\kappa(\mbf{x})\not=\kappa(\mbf{v}),\mbf{x}_{\fml{S}}=\mbf{v}_{\fml{S}})\le\delta
  \end{equation}
\end{definition}
From the definition of conditional probability (see above in this
section), it is immediate to observe that,
\[
\prob_{\mbf{x}}(\kappa(\mbf{x})\not=\kappa(\mbf{v}),\mbf{x}_{\fml{S}}=\mbf{v}_{\fml{S}})=
\frac{|\{\mbf{x}\in\mbb{F}:\kappa(\mbf{x})\not=c\land(\mbf{x}_{\fml{S}}=\mbf{v}_{\fml{S}})\}|}{|\{\mbf{x}\in\mbb{F}\}|}
\]

\begin{definition}[\bminidrs]
  Let $\kappa:\mbb{F}\to\fml{K}$, $\delta\in[0,1]$, 
  and instance $(\mbf{v}\in\mbb{F},c\in\fml{K})$.
  A \minidrs is a minimal subset $\fml{S}\subseteq\fml{F}$ that is 
  I$\delta$-relevant for $\kappa$ and $(\mbf{v},c)$.
\end{definition}

By observing that for larger sets we can only increase the likelihood
of the function differing from the value of $\kappa(\mbf{v})$, we have
the following result.

\begin{proposition} \label{prop:imono}
  I$\delta$-relevant sets are monotone, i.e.\ for
  $\fml{S}_1\supseteq\fml{S}_2$, it is the case that,
  \[
  \prob_{\mbf{x}}(\kappa(\mbf{x})\not=\kappa(\mbf{v}),\mbf{x}_{\fml{S}_1}=\mbf{v}_{\fml{S}_1})
  \le
  \prob_{\mbf{x}}(\kappa(\mbf{x})\not=\kappa(\mbf{v}),\mbf{x}_{\fml{S}_2}=\mbf{v}_{\fml{S}_2})
  \]
\end{proposition}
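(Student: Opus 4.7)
The plan is to use directly the explicit formula for the joint probability displayed immediately before the proposition, namely
\[
\prob_{\mbf{x}}(\kappa(\mbf{x})\not=\kappa(\mbf{v}),\mbf{x}_{\fml{S}}=\mbf{v}_{\fml{S}})
=\frac{|\{\mbf{x}\in\mbb{F}:\kappa(\mbf{x})\not=c\land\mbf{x}_{\fml{S}}=\mbf{v}_{\fml{S}}\}|}{|\{\mbf{x}\in\mbb{F}\}|}.
\]
The denominator is the constant $|\mbb{F}|$, independent of $\fml{S}$, so the inequality between the two sides of the proposition reduces to an inequality between the cardinalities of the corresponding numerator sets. Hence the entire argument will be carried out at the level of these sets of witnesses in $\mbb{F}$.

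Next, I would establish the key set inclusion. Denote
\[
A(\fml{S})=\{\mbf{x}\in\mbb{F}:\kappa(\mbf{x})\not=c\land\mbf{x}_{\fml{S}}=\mbf{v}_{\fml{S}}\}.
\]
Suppose $\fml{S}_1\supseteq\fml{S}_2$ and take any $\mbf{x}\in A(\fml{S}_1)$. By definition, $x_i=v_i$ holds for every $i\in\fml{S}_1$, and since $\fml{S}_2\subseteq\fml{S}_1$, the same equalities hold for every $i\in\fml{S}_2$, i.e.\ $\mbf{x}_{\fml{S}_2}=\mbf{v}_{\fml{S}_2}$. Combined with $\kappa(\mbf{x})\not=c$, which is unchanged, we obtain $\mbf{x}\in A(\fml{S}_2)$. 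Hence $A(\fml{S}_1)\subseteq A(\fml{S}_2)$, and consequently $|A(\fml{S}_1)|\le|A(\fml{S}_2)|$.

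Finally, dividing both cardinalities by $|\mbb{F}|$ yields the stated inequality,
\[
\prob_{\mbf{x}}(\kappa(\mbf{x})\not=\kappa(\mbf{v}),\mbf{x}_{\fml{S}_1}=\mbf{v}_{\fml{S}_1})
\le
\prob_{\mbf{x}}(\kappa(\mbf{x})\not=\kappa(\mbf{v}),\mbf{x}_{\fml{S}_2}=\mbf{v}_{\fml{S}_2}),
\]
which completes the proof. There is no real obstacle here; the whole argument is essentially a one-line observation that tightening the equality constraint by adding more indices can only remove points from the witness set, and the contrast with the non-monotone case of C$\delta$-relevant sets is precisely that replacing the conditional probability (whose denominator also depends on $\fml{S}$) by a joint probability over the fixed universe $\mbb{F}$ eliminates the competing effect that caused non-monotonicity there.
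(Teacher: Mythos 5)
Your proof is correct and matches the paper's own (very brief) justification: the paper simply notes that adding features to $\fml{S}$ can only make the event $\mbf{x}_{\fml{S}}=\mbf{v}_{\fml{S}}$ more restrictive, which is exactly your set-inclusion argument on the numerator of the displayed formula, with the denominator $|\mbb{F}|$ fixed. Nothing is missing; you have merely written out in full the one-line observation the paper relies on.
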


\section{Duality Results for Relevant Sets}
\label{sec:dual}

Duality results between different types of explanation enable the
implementation of explanation enumeration
algorithms~\cite{inms-nips19,inams-aiia20}\footnote{%
  These recent duality results about explanations build on the work of
Reiter~\cite{reiter-aij87}. In this section, we follow loosely a
recent overview~\cite{slaney-ecai14}.}
This section proves one initial duality result between
$\delta$-relevant sets. Given earlier
work~\cite{inms-nips19,inams-aiia20}, additional results can be
envisioned.

Let $C$ be a predicate, $C:2^{\mbb{F}}\to\{0,1\}$, such that,
\[
C(\fml{S}) = 
[\prob_{\mbf{x}}(\kappa(\mbf{x})\not=c,\mbf{x}_{\fml{S}}=\mbf{v}_{\fml{S}})\le\delta]
\]
We associate with $C$ a set of subsets of $\mbb{F}$,
\[
\mbb{C} = \{\fml{S}\subseteq\mbb{F}\,|\,C(\fml{S})\}
\]
In addition, we define a set of minimal sets,
\[
\mbb{C}_{\tn{min}}=\{\fml{S}\subseteq\mbb{F}\,|\,C(\fml{S})\land\forall(\fml{U}\subsetneq\fml{S}).\neg{C(\fml{U})}\}
\]

Next, we introduce the dual predicate $D$, $D:2^{\mbb{F}}\to\{0,1\}$,
such that,
\[
D(\fml{T}) = 
\neg C(\fml{F}\setminus\fml{T}) =
[\prob_{\mbf{x}}(\kappa(\mbf{x})\not=c,\mbf{x}_{\fml{F}\setminus\fml{T}}=\mbf{v}_{\fml{F}\setminus\fml{T}})>\delta]
\]
The dual of $\delta$-relevant sets are sets $\fml{T}$ of features which if changed entail a change of
class with a probability $>\delta$ and are thus probabilistic analogues of contrastive explanations~\cite{inams-aiia20}. 
As done above, we can define the following sets:
\[
\begin{array}{l}
  \mbb{D} = \{\fml{T}\subseteq\mbb{F}\,|\,D(\fml{T})\}\\[3pt]
  \mbb{D}_{\tn{min}} = \{\fml{T}\subseteq\mbb{F}\,|\,D(\fml{T})\land\forall(\fml{V}\subsetneq\fml{T}).\neg{D(\fml{V})}\}\\
\end{array}
\]

Given the above, monotonicity of the predicates $C$ and $D$
(see~\autoref{prop:imono}), allows us to prove the following results,
\begin{proposition} \label{prop:dual}
  $\mbb{C}=\tn{HS}(\mbb{D})$, 
  $\mbb{D}=\tn{HS}(\mbb{C})$,
  $\mbb{C}_{\tn{min}}=\tn{MHS}(\mbb{D})$, and
  $\mbb{D}_{\tn{min}}=\tn{MHS}(\mbb{C})$.
\end{proposition}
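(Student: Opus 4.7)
The plan is to reduce the four claimed identities to one core observation: both $C$ and $D$ are upward-closed under set inclusion. For $C$, this is exactly \autoref{prop:imono}: enlarging $\fml{S}$ can only decrease $\prob_{\mbf{x}}(\kappa(\mbf{x})\neq c, \mbf{x}_{\fml{S}}=\mbf{v}_{\fml{S}})$, so $C(\fml{S}_2)$ and $\fml{S}_2 \subseteq \fml{S}_1$ imply $C(\fml{S}_1)$. For $D$, I would note that $\fml{T}_2 \subseteq \fml{T}_1$ implies $\fml{F}\setminus\fml{T}_1 \subseteq \fml{F}\setminus\fml{T}_2$, so upward-closure of $C$ implies upward-closure of $\neg C$ on decreasing sets, hence upward-closure of $D(\fml{T}) = \neg C(\fml{F}\setminus\fml{T})$ on increasing $\fml{T}$.

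Next I would prove the first identity $\mbb{C} = \tn{HS}(\mbb{D})$ via the following chain of equivalences. A set $\fml{S}$ fails to hit some $\fml{T}\in\mbb{D}$ iff there exists $\fml{T} \subseteq \fml{F}\setminus\fml{S}$ with $D(\fml{T})$. Upward-closure of $D$ collapses this existential to the single maximal candidate: such a $\fml{T}$ exists iff $D(\fml{F}\setminus\fml{S})$ holds. Unfolding the definition, $D(\fml{F}\setminus\fml{S}) = \neg C(\fml{F}\setminus(\fml{F}\setminus\fml{S})) = \neg C(\fml{S})$. Therefore $\fml{S}\in \tn{HS}(\mbb{D})$ iff $C(\fml{S})$ iff $\fml{S}\in \mbb{C}$.

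The identity $\mbb{D}=\tn{HS}(\mbb{C})$ I would obtain by the symmetric argument, this time using upward-closure of $C$: a $\fml{T}$ fails to hit some $\fml{S}\in\mbb{C}$ iff some $\fml{S}\subseteq \fml{F}\setminus\fml{T}$ satisfies $C$, which (by upward-closure of $C$) happens iff $C(\fml{F}\setminus\fml{T})$, i.e.\ iff $\neg D(\fml{T})$. Hence $\fml{T}\in\tn{HS}(\mbb{C})$ iff $D(\fml{T})$. Once both $\tn{HS}$ identities are in hand, the two $\tn{MHS}$ identities are immediate, since $\mbb{C}_{\tn{min}}$ is, by definition, the set of $\subseteq$-minimal elements of $\mbb{C} = \tn{HS}(\mbb{D})$, which is exactly $\tn{MHS}(\mbb{D})$, and similarly for $\mbb{D}_{\tn{min}}$.

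The only real subtlety to watch for is the collapse of the existential $\exists \fml{T}\subseteq \fml{F}\setminus\fml{S}$ to the single witness $\fml{F}\setminus\fml{S}$: this step genuinely needs upward-closure, so the whole argument hinges on \autoref{prop:imono} and its mirror for $D$. Everything else is definitional bookkeeping, and I expect no other obstacle.
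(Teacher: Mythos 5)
Your proof is correct, and its essential engine is the same as the paper's---monotonicity of $C$ (\autoref{prop:imono}) combined with the complementation identity $D(\fml{T})=\neg C(\fml{F}\setminus\fml{T})$---but your decomposition differs in two useful ways. First, you make the upward-closure of $D$ explicit and use it to collapse the existential witness to the single set $\fml{F}\setminus\fml{S}$, which delivers each hitting-set identity as one biconditional chain, i.e., both inclusions at once; the paper instead argues by contradiction only that a set satisfying $C$ must hit every member of $\mbb{D}$ (the inclusion $\mbb{C}\subseteq\tn{HS}(\mbb{D})$) and leaves the converse inclusion, as well as the second identity, to an appeal to symmetry. Second, you get the minimality claims for free: once $\mbb{C}=\tn{HS}(\mbb{D})$ and $\mbb{D}=\tn{HS}(\mbb{C})$ hold as set equalities, their $\subseteq$-minimal elements coincide by definition, so $\mbb{C}_{\tn{min}}=\tn{MHS}(\mbb{D})$ and $\mbb{D}_{\tn{min}}=\tn{MHS}(\mbb{C})$ are immediate, whereas the paper runs a separate contradiction argument (taking a smaller hitting set $\fml{G}\subset\fml{S}$, passing to $\fml{W}$ with $\fml{G}=\fml{F}\setminus\fml{W}$, and showing $\fml{G}$ misses $\fml{W}\in\mbb{D}$). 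Your route is a bit more economical and leaves no ``similar'' cases implicit; the paper's version buys a concrete, element-chasing account of why minimality transfers. I see no gap in your argument.
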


\begin{proof}
First, we consider $\mbb{C}=\tn{HS}(\mbb{D})$.
Suppose, there exists $\fml{S}$ such that it is not a hitting set of sets in $\mbb{D}$. 
Namely, $\fml{S}$ does not hit some set $\fml{T}\in \mbb{D}$, $\fml{S} \cap \fml{T} = \emptyset$.
By definition, $\fml{S} \subset \fml{F} \setminus \fml{T}$. 
We recall that a predicate $P$ is \emph{monotone} if for all $S,S' \subseteq \fml{F}$,
\[ S \subset S' \land P(S) \ \ \rightarrow \ \ P(S').
\]
Hence, as  $\fml{S} \subset \fml{F} \setminus \fml{T}$ and $C(S)$ holds, $C(\fml{F} \setminus \fml{T})$ must hold.
This leads to a contradiction as $D(\fml{T}) = \neg C(\fml{F} \setminus \fml{T})$  by definition.  The reverse direction, $\mbb{D}=\tn{HS}(\mbb{C})$, is similar.

Second, we consider $\mbb{C}_{\tn{min}}=\tn{MHS}(\mbb{D})$.
The proof that $\fml{S} \in \mbb{C}_{\tn{min}}$ is a hitting set of $\mbb{D}$ follows from the argument above as $\mbb{C}_{\tn{min}} \subseteq \mbb{C}$.
%
Next, we suppose $\fml{S} \in \mbb{C}_{\tn{min}}$ is \emph{not a minimal} hitting set of $\mbb{D}$. 
Let $\fml{G} \subset \fml{S}$ be the minimal hitting set.
By definition of minimality of $\fml{S}$, $\neg C(\fml{G})$  must hold. 
Consider $\fml{W}$ such that $\fml{G} = \fml{F} \setminus \fml{W}$.
We have that $\neg C(\fml{G})=  \neg C(\fml{F} \setminus \fml{W}) = D(\fml{W})$. Therefore, $\fml{W}\in \mbb{D}$. 
As $\fml{G} \cap \fml{W} = \emptyset$ by construction, 
 $\fml{G}$ does not hit $\fml{W}\in \mbb{D}$ and it is not a hitting set.
%
The reverse direction, $\mbb{D}_{\tn{min}}=\tn{MHS}(\mbb{C})$, is similar.
\end{proof}

\section{Relevant Sets for DTs \& Other Classifiers}
\label{sec:rsdt}

This section shows that the decision problem for $\delta$-relevant
(and so C$\delta$-relevant) sets is in NP when $\kappa$ is represented
by a decision tree\footnote{%
  For simplicity, the paper considers the case of non-continuous
  features. However, in the case of DTs, the results generalize to
  continuous features.}.
Thus, \mincdrs can be solved with at most a logarithmic number of
calls to an NP oracle. (This is true since we minimize on the number
of features.)
This section also shows the decision problem for I$\delta$-relevant
sets is in P. Thus, the \minidrs can be solved in polynomial time in
the case of DTs.
Furthermore, the section extends the previous results to the case of
knowledge compilation (KC) languages~\cite{darwiche-jair02}.

\paragraph{Path probabilities for DTs.}
Let $\mbf{v}\in\mbb{F}$ and suppose that 
$\kappa(\mbf{v})=c\in\fml{K}$.
For a DT $\fml{T}$, let $\fml{P}=\{P_1,\ldots,P_M\}$ denote the paths
with prediction $c$, and let $\fml{Q}=\{Q_1,\ldots,Q_N\}$ denote the
paths with a prediction in $\fml{K}\setminus\{c\}$.
Let $\fml{R}=\fml{P}\cup\fml{Q}$ denote the set of all paths in the DT
$\fml{T}$.
%
%
For $R_j\in\fml{R}$, let $||R_j||$ denote the number of points in
$\mbb{F}$ consistent with $R_j$. Thus, the path probability of any
path $R_j\in\fml{R}$ is,
$\prob(R_j)=\sfrac{||R_j||}{||\mbb{F}||}$. (The path probability of
some tree path $R_j$ is the empirical probability of a point in
feature space chosen at random being consistent with the path $R_j$.)
As a result, we get,
\[
\sum\nolimits_{R_j\in\fml{P}}\prob(R_j)+
\sum\nolimits_{R_j\in\fml{Q}}\prob(R_j)=
1
\]
%
Moreover, $||\mbb{F}||=||\mbb{D}_1||\times\cdots\times||\mbb{D}_m||$.
For each path $R_j$, let $d_i$ denote the number of values in
$\mbb{D}_i$ that is consistent with the literals defined on $x_i$ in
path $R_j$. Thus, $||R_j||=d_1\times\cdots\times{d}_m$.


\begin{table}[t]
  \hspace*{-0.20cm}
    \scalebox{0.85}{
      \renewcommand{\tabcolsep}{0.2735em}
      {\footnotesize
        \begin{tabular}{c|ccccc|ccccccccccccc} \toprule
          Path $R_j$ &
          $Q_1$ & $Q_2$ & $Q_3$ & $Q_4$ & $Q_5$ & 
          $P_1$ & $P_2$ & $P_3$ & $P_4$ & $P_5$ &
          $P_6$ & $P_7$ & $P_8$ & $P_9$ & $P_{10}$ &
          $P_{11}$ & $P_{12}$ & $P_{13}$
          \\ \toprule
          $\Prob(R_j)$  & 
          $\alpha^1$ & 
          $\alpha^4\beta^2$ & 
          $\alpha^4\beta^3$ & 
          $\alpha^4\beta^4$ & 
          $\alpha^1\beta^4$ & 
          $\alpha^3\beta^3$ & 
          $\alpha^2\beta^3$ & 
          $\alpha^3\beta^1$ & 
          $\alpha^1\beta^2$ & 
          $\alpha^4\beta^3$ & 
          $\alpha^4\beta^2$ & 
          $\alpha^3\beta^2$ & 
          $\alpha^1\beta^3$ & 
          $\alpha^3\beta^5$ & 
          $\alpha^2\beta^4$ & 
          $\alpha^1\beta^5$ & 
          $\alpha^2\beta^3$ & 
          $\beta^5$ 
          \\ \bottomrule
        \end{tabular}
      }
    }
    \smallskip
  \caption{Path probabilities for running example} \label{tab:02}
\end{table}

\begin{example} \label{fig:02}
  For the example in~\autoref{fig:01}, \autoref{tab:02} shows the path
  probability for each path in the DT, computed using the above
  definition of path probability.
\end{example}

\paragraph{\bmincdrs{s} for DTs.}
Whereas in the general case, deciding whether there exists a
$\delta$-relevant set of size no greater than $k$ is complete for
$\tn{NP}^{\tn{PP}}$~\cite{kutyniok-jair21}, in the the case of DTs,
one can prove that this decision problem is in NP (and the same
applies in the case of a subset-minimal C$\delta$-relevant set).

\begin{proposition}
  For DTs, given $\mbf{v}\in\mbb{F}$, with $\kappa(\mbf{v})=c\in\fml{K}$,
  deciding the existence of min-$\delta$-relevant set of size at most
  $k$ is in NP.
\end{proposition}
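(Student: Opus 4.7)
The plan is to exhibit a standard guess-and-check NP algorithm, using the fact that a decision tree has a polynomial number of root-to-leaf paths on which we can count consistent assignments exactly.

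First I would describe the certificate: a subset $\fml{S}\subseteq\fml{F}$ with $|\fml{S}|\le k$. This has size polynomial in the input (encoded as a list of feature indices), so guessing it is within NP. It then remains to verify in deterministic polynomial time that $\fml{S}$ is indeed a $\delta$-relevant set for $(\kappa,\mbf{v})$, i.e.\ that $\prob_{\mbf{x}}(\kappa(\mbf{x})=c\mid\mbf{x}_{\fml{S}}=\mbf{v}_{\fml{S}})\ge\delta$, which is equivalent to the purely combinatorial inequality obtained by multiplying out \eqref{eq:pdefs}.

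For the verification step, I would exploit the structure of the DT $\fml{T}$. Enumerate its paths $\fml{R}=\fml{P}\cup\fml{Q}$ (polynomially many in $|\fml{T}|$). For each path $R_j$ and each feature $i\in\fml{F}$, let $\mbb{E}_i^{(j)}\subseteq\mbb{D}_i$ denote the set of values of $x_i$ consistent with the literals on $R_j$ (with $\mbb{E}_i^{(j)}=\mbb{D}_i$ if $x_i$ is not tested on $R_j$). Declare the path $R_j$ \emph{compatible} with the fixing $\mbf{x}_{\fml{S}}=\mbf{v}_{\fml{S}}$ iff $v_i\in\mbb{E}_i^{(j)}$ for every $i\in\fml{S}$; checking this is done in $O(|R_j|\cdot|\fml{S}|)$ time. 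For each compatible path, set $d_i^{(j)}=1$ when $i\in\fml{S}$ and $d_i^{(j)}=|\mbb{E}_i^{(j)}|$ when $i\notin\fml{S}$, and compute the count of points of $\mbb{F}$ that are simultaneously consistent with $R_j$ and with $\mbf{x}_{\fml{S}}=\mbf{v}_{\fml{S}}$ as the product $\prod_{i=1}^m d_i^{(j)}$. Because the paths of $\fml{T}$ partition $\mbb{F}$ (every $\mbf{x}\in\mbb{F}$ follows exactly one path), summing these counts over $R_j\in\fml{P}$ yields the numerator of \eqref{eq:drs}, and summing over $R_j\in\fml{R}$ yields the denominator.

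Once we have the numerator $N$ and denominator $D$, we just check whether $N\ge\delta\cdot D$, which on numbers written with polynomially many bits is polynomial-time decidable. This certifies $\fml{S}$ in deterministic polynomial time, so the decision problem is in NP. I do not expect a real obstacle: the only subtlety is noting that the paths of a DT genuinely partition $\mbb{F}$ (as stated in \autoref{sec:prelim}), which is what allows the counts over paths to add up exactly to $|\{\mbf{x}:\kappa(\mbf{x})=c\land\mbf{x}_{\fml{S}}=\mbf{v}_{\fml{S}}\}|$ and $|\{\mbf{x}:\mbf{x}_{\fml{S}}=\mbf{v}_{\fml{S}}\}|$ without double counting, thereby avoiding the $\#\tn{P}$/$\tn{PP}$-style counting difficulty that made the general case hard for $\nppp$.
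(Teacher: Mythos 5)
Your proof is correct, but it takes a different route from the paper. You establish NP membership directly from the definition: the certificate is the set $\fml{S}$ itself, and you give a deterministic polynomial-time verifier that computes the exact numerator and denominator of \eqref{eq:drs} by enumerating the (polynomially many) paths of the DT, discarding paths incompatible with $\mbf{x}_{\fml{S}}=\mbf{v}_{\fml{S}}$, and counting the remaining points as products of per-feature domain counts --- soundness resting on the fact that the paths partition $\mbb{F}$, so the per-path counts add up without double counting (and the denominator is nonzero since $\mbf{v}$ itself is consistent with the fixing). The paper instead proves membership by a reduction to the decision version of MaxSMT: it introduces selector variables $s_i$ for features and variables $t_j$ marking paths made inconsistent, links them by the implications $s_i\limply t_j$ and $t_j\limply\bigvee_{i\in I_j}s_i$, and expresses both the precision requirement and the bound $k$ as linear inequalities over these boolean variables, so that any satisfying assignment yields a $\delta$-relevant set of size at most $k$. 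Your argument is the more elementary and self-contained one for the complexity claim, and it makes explicit the polynomial-time checkability of $\delta$-relevancy for DTs (which is also what underlies the paper's later polynomial-time \minidrs algorithm); the paper's encoding buys something practical instead, namely a constraint formulation that can be handed to a MaxSMT solver to actually compute a cardinality-minimal $\delta$-relevant set, which is how the paper intends the result to be used. Both arguments are compatible, and your verifier is in fact what one would use inside the binary-search-with-NP-oracle scheme the paper mentions after the proposition.
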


\begin{Proof}
  We reduce the problem of deciding the existence of a
  min-$\delta$-relevant set of size at most $k$ to the decision
  version of the maximum satisfiability modulo theories (SMT)
  problem~\cite{barrett-hmc18,bjorner-tacas15} (assuming a suitable
  quantifier-free theory).

  Let $s_i$ be a boolean variable such that $s_i=1$ iff $i\in\fml{F}$
  is included in the $\delta$-relevant set. 
  Moreover, let $t_j$ be a boolean variable, such that $t_j=1$ iff
  path $R_j\in\fml{P}\cup\fml{Q}$ is inconsistent, i.e.\
  some feature $i$ added to the $\delta$-relevant set makes $R_j$
  inconsistent.
  Thus, if path $R_j$ is inconsistent with the value given to feature
  $i$, then it must be the case that,
  \[
  s_i\limply{t_j}
  \]
  Also, if $R_j$ is deemed inconsistent, then it must be the case
  that,
  \[
  t_j\limply\bigvee\nolimits_{i\in{I_j}}{s_i}
  \]
  where $i$ ranges over the set of features $I_j$ that make $R_j$
  inconsistent, given $\mbf{v}$.

  The set of picked features $\fml{S}$ , 
  (i.e.\ $\fml{S}$ is the set of features having $s_i=1$), ensures that
  \[
  \Prob_{\mbf{x}}(\kappa(\mbf{x})=\kappa(\mbf{v})|\mbf{x}_{\fml{S}}=\mbf{v}_{\fml{S}})\ge\delta
  \]
  From which we get,
  \[
  \begin{array}{l}
    \Prob_{\mbf{x}}(\kappa(\mbf{x})=\kappa(\mbf{v})\land\mbf{x}_{\fml{S}}=\mbf{v}_{\fml{S}})\ge\delta\times\Prob_{\mbf{x}}(\mbf{x}_{\fml{S}}=\mbf{v}_{\fml{S}})
    \Leftrightarrow
    \\[3pt]
    \sum_{j,R_j\in\fml{P}}\neg{t_j}\times\Prob(R_j)\ge
    \delta\times\sum_{j,R_j\in\fml{P}\cup\fml{Q}}\neg{t_j}\times\Prob(R_j)\\
  \end{array}
  \]
  which is a linear inequality on the $t_j$ variables, since the
  probabilities are constant.
  An additional constraint is that the number of $s_i$ variables
  assigned value 1 cannot exceed $k$, i.e.\ the bound on the size of
  the relevant set $\fml{S}$:
  \[
  \sum\nolimits_{i\in\fml{F}}{s_i}\le{k}
  \]
  which is another linear inequality, this one on the $s_i$ variables.
  By conjoining all the constraints, and assignment to the $s_i$ and
  $t_j$ variables that satisfies the constraints picks a
  $\delta$-relevant set whose size does not exceed $k$.
  \qedhere
\end{Proof}

Clearly, since the decision problem is in NP, it is immediate how to
compute a cardinality minimal $\delta$-relevant set by binary search
on the number of $s_i$ variables included in the set. Since the number
of variables equals the size of $\fml{F}$, then we are guaranteed to
need (in the worst-case) a logarithmic number of calls to an NP
oracle.
Furthermore, since the decision problem for the min-$\delta$-relevant
problem is in NP, it is also the case that the decision problem for
the min-C$\delta$-relevant problem is also in NP.
Finally, we conjecture that the min-$\delta$-relevant set, but also
the min-C$\delta$-relevant problem are both hard for NP. These
conjectures are further justified below.

\paragraph{\bminidrs{s} for DTs.}
One apparent reason to the conjectured complexity is the fact that the
conditional probability used for defining $\delta$-relevant and
C$\delta$-relevant sets is non-monotone. As a result, earlier in the
paper we introduced I$\delta$-relevant sets, which were shown to be
monotone in~\autoref{prop:imono}.
We now show that, in the case of DTs, computing a subset-minimal
I$\delta$-relevant set is in P.
%
%
%
%
The criterion for a set of features to be I$\delta$-relevant is:
\[
\Prob_{\mbf{x}}(\kappa(\mbf{x})\not=\kappa(\mbf{v}),\mbf{x}_{\fml{S}}=\mbf{v}_{\fml{S}})\le\delta
\]
Observe that this constraint holds when $\fml{S}=\fml{F}$ and,
by~\autoref{prop:imono} I$\delta$-relevant sets are monotone. As a
result, we can compute a subset minimal I$\delta$-relevant set, as
proposed in\autoref{alg:1axp}. (The algorithm is standard, and can be
traced to at least the work of
Chinneck \& Dravnieks~\cite{chinneck-jc91}.
The novelty is its use for computing min-I$\delta$-relevant sets.) 
\begin{algorithm}[t]
  \hspace*{\algorithmicindent}
\textbf{Input}: {Classifier $\kappa$, instance $\mbf{v}$, value $\delta$}\\
\hspace*{\algorithmicindent}
\textbf{Output}: {IDRS $\fml{S}$}
\begin{algorithmic}[1]
  \Procedure{$\axp$}{$\kappa,\mbf{v},\delta$}
  \State{$\fml{S} \gets \{1,\ldots,m\}$}
  \For{$i\in\{1,\ldots,m\}$} \Comment{Invariant: $\Prob_{\mbf{x}}(\kappa(\mbf{x})\not=\kappa(\mbf{v}),\mbf{x}_{\fml{S}}=\mbf{v}_{\fml{S}})\le\delta$}
    \State{$\fml{S}\gets\fml{S}\setminus\{i\}$}
    \If{$\Prob_{\mbf{x}}(\kappa(\mbf{x})\not=\kappa(\mbf{v}),\mbf{x}_{\fml{S}}=\mbf{v}_{\fml{S}})>\delta$}
      \State{$\fml{S} \gets \fml{S}\cup\{i\}$}
    \EndIf
  \EndFor
  \State{\bfseries{return}~{$\fml{S}$}}
\EndProcedure
\end{algorithmic}

  \caption{Finding one min-I$\delta$-relevant set (IDRS)} \label{alg:1axp}
\end{algorithm}
The algorithm maintains an invariant representing the assertion that
the set $\fml{S}$ is a I$\delta$-relevant set.
Initially, all features are included in set $\fml{S}$,
i.e.\ $\fml{S}=\fml{F}$, and so $\fml{S}$ is a I$\delta$-relevant
set.
The algorithm then iteratively removes one feature at a time, and
checks whether the invariant is broken. If it is, then the feature is
added back to set $\fml{S}$. Otherwise, we are guaranteed, by
monotonicity, that for \emph{any} superset of set $\fml{S}$, the
invariant holds.

\begin{example}
  We consider the running example (see~\autoref{fig:01}, with instance
  $(\mbf{v},c)$ given by
  $\mbf{v}=(v_1,v_2,v_3,v_4,v_5,v_6,v_7,v_8,v_9)=(1,1,1,1,0,0,0,0,1)$
  and $c=\kappa(\mbf{v})=1$. 
  As argued earlier, by setting $\delta=0$, one AXp is
  $\fml{X}=\{1,2,3,4,9\}$.
  Let
  $\epsilon(\fml{S})=\Prob_{\mbf{x}}(\kappa(\mbf{x})=\kappa(\mbf{v}),\mbf{x}_{\fml{S}}=\mbf{x}_{\fml{S}})$,
  denoting the error associated with some set of features $\fml{S}$.
  With the purpose of improving the interpretability of the
  explanation, we set $\delta=0.03$, and work towards finding an
  explanation with fewer literals.\\
  To illustrate the execution of the algorithm, we assume that the
  features are analyzed in the order $\langle1,2,3,4,9\rangle$.
  \autoref{tab:03} summarizes the execution of the algorithm.
  The algorithm first analyzes dropping feature $1$ from the
  explanation $\fml{X}$. In this case, only path $Q_1$ can be
  made consistent. Given that the probability of picking an assignment
  consistent with $Q_1$ is $0.5$, then feature 1 cannot be dropped
  from the explanation, as that would put the error about the target 
  threshold.
  Next, the algorithm considers dropping feature $2$ from the
  explanation. In this case, only path $Q_2$ can be made consistent.
  Given that the probability of picking an assignment consistent with
  $Q_2$ is $(\sfrac{1}{2})^6=0.015625$, then are still below the
  target absolute fraction of error of $\delta=0.03$.
  Hence, we remove feature 2 from the explanation.
  For feature 3, and since feature 2 is already dropped, then both
  paths $Q_2$ and $Q_3$ can be made consistent. In this case, the
  error raises to 0.0234, but it is still below 0.3, and so feature 3
  is also dropped from the explanation.
  A similar analysis allows concluding that feature 4 can also be
  dropped from the explanation.
  In contrast, after removing features 2, 3 and 4, feature 9 cannot be
  dropped form the explanation. The resulting approximate explanation
  (i.e.\ a I$\delta$-relevant set) is thus $\{1,9\}$. Moreover, the
  explanation $\{1,9\}$ ensures that, in more than 97\% of the points
  in feature space consistent with the values of features 1 and 9, the
  prediction will be the intended one, i.e.\ 1.
\end{example}

\begin{table}[t]
  \smallskip
  \begin{center}
    \begin{tabular}{c|c|c|c|c|c} \toprule
      $\fml{S}$ & $i$ & $\fml{R}=\fml{S}\setminus\{i\}$ & $\fml{Q}$
      paths consistent with $\fml{R}$
      & $\epsilon(\fml{R})$ 
      & Decision
      \\ \toprule 
      $\{1,2,3,4,9\}$ & 1 & $\{2,3,4,9\}$ & $Q_1$ & $0.5$ & Keep 1 \\
      $\{1,2,3,4,9\}$ & 2 & $\{1,3,4,9\}$ & $Q_2$ & 0.0157 & Drop 2 \\ 
      $\{1,3,4,9\}$ & 3 & $\{1,4,9\}$ & $Q_2,Q_3$ & 0.0234 & Drop 3 \\ 
      $\{1,4,9\}$ & 4 & $\{1,9\}$ & $Q_2,Q_3,Q_4$ & 0.0273 & Drop 4 \\ 
      $\{1,9\}$ & 9 & $\{1\}$ & $Q_2,Q_3,Q_4,Q_5$ & $>0.03$ & Keep 9 \\
      \bottomrule
    \end{tabular}
  \end{center}
  \caption{Execution of~\autoref{alg:1axp}} \label{tab:03}
\end{table}

\paragraph{KC languages~\cite{darwiche-jair02}.}
Knowledge compilation (KC) languages~\cite{darwiche-jair02} aim at
simplifying queries and transformations in knowledge bases, and have
recently been used as ML models. Concrete examples include binary
decision diagrams~\cite{darwiche-ijcai18,darwiche-aaai19}, among
others~\cite{marquis-kr20,barcelo-nips20,marquis-corr21}.
By noting that the explanation algorithm proposed for DTs exploits
counting of models after conditioning (i.e.\ fixing to a value) a set of
selected features, then we can conclude that, for KC languages that
implement conditioning and model counting in polynomial time, one
min-I$\delta$-relevant set can also be computed in polynomial time. 


\section{Experimental Results} \label{sec:res}
This section summarizes the experimental results, which aim at
demonstrating the efficiency of min-I$\delta$-relevant sets if
computed as explanations for DT models trained for various well-known
datasets, over heuristic explanations of
Anchor~\cite{guestrin-aaai18}, both in terms of runtime performance
and explanation precision.

\paragraph{Implementation and benchmarks.}
Min-I$\delta$-relevant sets are computed following the ideas of
\autoref{sec:rsdt} and utilizing the polynomial-time
\autoref{alg:1axp}.
The prototype implementation of the algorithm (\textsc{idrs}) is written in Perl while
the overall experiment is set up and run in Python.\footnote{The
  prototype implementation, benchmarks, instructions and log files of
  the experiment will be made publicly available in the final version
of the paper.}
The precision of the resulting explanations is then assessed using the
generic (and non-monotone) precision metric of
Anchor~\cite{guestrin-aaai18}.
The experiments are conducted on a MacBook Pro with a Dual-Core Intel
Core~i5 2.3GHz CPU with 8GByte RAM running macOS Catalina.

The benchmarks used in the experiment include publicly available and
widely used datasets.  
The datasets originate from UCI 
ML Repository \cite{uci}
and Penn 
ML Benchmarks \cite{Olson2017PMLB}.
The number of training instances (resp.\ features) in the target
datasets varies from 3710 to 145585 (resp.\ 12 to 41).
%
%
All the decision tree models are trained using the learning tool
\emph{ITI} (\emph{Incremental Tree Induction})~\cite{utgoff-ml97,iti}.
Note that the accuracy of all the models is above 73\%, the maximum
depth of the trees varies from 14 to 60 and the total number of 
nodes varies from 49 to 9969.

The experiment was set to iterate over some of the \emph{unique} (see
below) instances of a dataset and to compute an explanation for each
such instance: either a min-I$\delta$-relevant set or an anchor.
As the baseline, we ran Anchor with the default explanation precision
of 0.95.
The prototype implementation \textsc{idrs} was run for the values of $\delta$ from
$\{\textrm{0.05},   \textrm{0.02},  \textrm{0.01}, \textrm{0.0}\}$.
It should be observed that the proposed experiment gives an advantage
to Anchor, as Anchor is allowed to computes explanations guided by its
own metric, whereas I$\delta$-relevant sets \emph{know nothing} about
this metric (which they will be assessed with).

\setlength{\tabcolsep}{5pt}
\newcommand{\lpr}{(}
\newcommand{\rpr}{)}

\sisetup{%
  math-rm=\textrm
}

\begin{table*}[t]
\centering
\resizebox{\textwidth}{!}{
  \begin{tabular}{lcc|c |cccccccccccccccc cccc}
\toprule[1.2pt]
\multirow{3}{*}{\bf Dataset} & \multirow{3}{*}{\bf \#F} & \multirow{3}{*}{\bf \#I} & \multirow{3}{*}{\bf \large$\bm\delta$} & \multicolumn{7}{c}{\bf \textsc{idrs} } &  \multicolumn{7}{c}{\bf Anchor} \\
\cmidrule[0.8pt](lr{.75em}){5-11}
\cmidrule[0.8pt](lr{.75em}){12-19}
                             &  \multicolumn{2}{c|}{} &  & \multicolumn{2}{c}{\bf Length} &  \multicolumn{2}{c}{\bf Precision (\%)} &  \multicolumn{3}{c}{\bf Runtime (s)} & \multicolumn{2}{c}{\bf Length} &  \multicolumn{2}{c}{\bf Precision (\%)}  &  \multicolumn{3}{c}{\bf Runtime (s)}  \\  
\cmidrule[0.8pt](lr{.75em}){5-6}
\cmidrule[0.8pt](lr{.75em}){7-8}
\cmidrule[0.8pt](lr{.75em}){9-11}
\cmidrule[0.8pt](lr{.75em}){12-13}
\cmidrule[0.8pt](lr{.75em}){14-15}
\cmidrule[0.8pt](lr{.75em}){16-18}

  %
& \multicolumn{2}{c|}{} &  & {\bf M} & {\bf avg} &  {\bf avg} & {\bf dev} & {\bf m }  & {\bf M }  & {\bf avg }   & {\bf M }  & {\bf avg } & {\bf avg} & {\bf dev} & {\bf m }  & {\bf M }  & {\bf avg } \\
\toprule[1.2pt]

\multirow{4}{*}{adult} & \multirow{4}{*}{12} & \multirow{4}{*}{1766} &  0.0  & 10 & 5.1  & 100 & 0.0 & 0.04 & 0.07 & 0.05     & \multirow{4}{*}{12} &  \multirow{4}{*}{5.3} &  \multirow{4}{*}{87.8} &  \multirow{4}{*}{16.7} &  \multirow{4}{*}{0.14} &  \multirow{4}{*}{2.99} &  \multirow{4}{*}{1.20}     \\
 & & & 0.01    & 6 & 3.3 & 85.7 & 20.8 & 0.04 & 0.08 & 0.04   &  &  &  &  &  &  &  &      \\ 
 & & & 0.02  & 6 & 2.8  & 83.0 & 16.4 & 0.04 & 0.08 & 0.05     &  &  &  &  &  &  &  &      \\
 & &  & 0.05    & 5 & 1.9  & 77.7 & 21.0 & 0.04 & 0.11 & 0.06    &  &  &  &  &  &  &  &      \\

\midrule

\multirow{4}{*}{allhyper} & \multirow{4}{*}{29} & \multirow{4}{*}{1113} & 0.0  & 7 & 4.4  & 100 & 0.0 & 0.05 & 0.05 & 0.05    &  \multirow{4}{*}{29} &  \multirow{4}{*}{1.2} &    \multirow{4}{*}{89.5} &  \multirow{4}{*}{7.0} &  \multirow{4}{*}{0.28} &  \multirow{4}{*}{5.75} &  \multirow{4}{*}{0.35}     \\
 & & & 0.01    & 6 & 3.0 & 98.4 & 4.3 & 0.04 & 0.08 & 0.05    &  &  &  &  &  &  &  &      \\
 & & & 0.02  & 6 & 1.0  & 97.7 & 6.3 & 0.05 & 0.07 & 0.05     &  &  &  &  &  &  &  &      \\
 & & & 0.05  & 4 & 1.0  & 97.7 & 6.3 & 0.04 & 0.10 & 0.05     &  &  &  &  &  &  &  &      \\
\midrule

\multirow{4}{*}{ann-thyroid} & \multirow{4}{*}{21} & \multirow{4}{*}{2139} & 0.0  & 10 & 3.9  & 100 & 0.0 & 0.08 & 0.10 & 0.08     &  \multirow{4}{*}{21} &  \multirow{4}{*}{1.3} &   \multirow{4}{*}{96.4} &  \multirow{4}{*}{8.7} &  \multirow{4}{*}{0.22} &  \multirow{4}{*}{8.63} &  \multirow{4}{*}{0.48}     \\
 & & & 0.01    & 6 & 1.4 & 96.9 & 11.4 & 0.07 & 0.13 & 0.08    &  &  &  &  &  &  &  &      \\
 & & & 0.02  & 6 & 1.0  & 96.8 & 11.2 & 0.08 & 0.12 & 0.08     &  &  &  &  &  &  &  &      \\
 & & & 0.05  & 5 & 0.1  & 95.2 & 9.9 & 0.07 & 0.17 & 0.10     &  &  &  &  &  &  &  &      \\
\midrule

\multirow{4}{*}{fars} & \multirow{4}{*}{29} & \multirow{4}{*}{2790} & 0.0  & 15 & 5.9  & 100 & 0.0 & 0.67 & 0.92 & 0.69     &  \multirow{4}{*}{29} &  \multirow{4}{*}{9.0} &    \multirow{4}{*}{73.5} &  \multirow{4}{*}{40.3} &  \multirow{4}{*}{0.30} &  \multirow{4}{*}{57.43} &  \multirow{4}{*}{7.54}     \\
 & & & 0.01    & 6 & 2.0 & 75.2 & 30.9 & 0.58 & 0.81 & 0.69   &  &  &  &  &  &  &  &      \\
 & & & 0.02  & 6 & 2.1  & 70.2 & 35.5 & 0.67 & 0.98 & 0.71    &  &  &  &  &  &  &  &      \\
 & & & 0.05  & 5 & 1.7  & 58.6 & 38.0 & 0.63 & 0.89 & 0.70     &  &  &  &  &  &  &  &      \\
\midrule

\multirow{4}{*}{kddcup} & \multirow{4}{*}{41} & \multirow{4}{*}{4368} & 0.0  & 14 & 11.4  & 100 & 0.0 & 0.44 & 4.14 & 0.46    &  \multirow{4}{*}{39} &  \multirow{4}{*}{2.6}  &  \multirow{4}{*}{23.1} &  \multirow{4}{*}{19.0} &  \multirow{4}{*}{0.42} &  \multirow{4}{*}{137.3} &  \multirow{4}{*}{10.59}     \\
 & & & 0.01    & 8 & 4.4 & 53.7 & 42.9 & 0.42 & 0.84 & 0.45    &  &  &  &  &  &  &  &      \\
 & & & 0.02  & 7 & 4.2  & 51.8 & 22.0 & 0.45 & 0.61 & 0.46     &  &  &  &  &  &  &  &      \\
 & & & 0.05 & 6 & 2.8  & 38.7 & 22.0 & 0.41 & 0.54 & 0.44     &  &  &  &  &  &  &  &      \\

\bottomrule[1.2pt]
\end{tabular}
}
\caption{\footnotesize{Assessing explanations of   I$\delta$-relevant sets (\textsc{idrs}) and comparison with Anchor's explanations.
Columns {\bf \#F} and {\bf \#I} report, resp., the number of features 
and the number of tested instances, in the dataset. 
(Note that for a dataset containing less (resp.\ more) than 10.000 instances,  
30\%  (resp.\ 3\%) of its instances, randomly selected, are used to be explained. 
Moreover, duplicate rows in  the datasets are filtered.) 
Sub-Columns {\bf M} and {\bf avg} of column {\bf Length} show, resp., 
the maximum and average size of an explanation.
Sub-columns  {\bf avg} and {\bf dev} of column {\bf Precision} 
show, resp., the average and standard deviation of the explanation's precision. 
Sub-columns   {\bf m},  {\bf M} and {\bf avg}  of column {\bf Runtime} 
report, resp., minimal, maximal and average time in seconds  
to compute an explanation. } 
}
\label{tab:res}
\end{table*}

\paragraph{Results.}
\autoref{tab:res} details the results of our
experiment. 
First of all, observe that I$\delta$-relevant sets are extremely
simple to compute.
Concretely, the runtime of our prototype explainer \textsc{idrs} normally takes just
\emph{a fraction of a second} per data instance (and never exceeds a
second) to get a subset-minimal I$\delta$-relevant set.
This means that it is at least 1--2 orders of magnitude faster than
Anchor, which can take up to 138 seconds to get a single explanation,
with the average explanation time being up to 10 seconds.
Also observe that the runtime of the proposed approach is not affected
by the value of $\delta$ and tends to be negligible overall.

Second, length-wise I$\delta$-relevant sets also outperform Anchor.
In particular, it is not surprising that the largest
I$\delta$-relevant sets correspond to $\delta$=0 and these on average
include up to 11.4 features.
Explanation size gets further improved when $\delta$ is 0.01, 0.02 or 0.05.
Concretely, it is reduced to \emph{a few} literals per explanation (on
average below 5).
(Also, please refer to the value of standard deviation shown in the
tables.)
On the contrary, Anchor's explanations utilize up to 39 literals, with
the average explanation containing 9 literals. 
These results show an important difference between \textsc{idrs} and
Anchor in terms of interpretability~\cite{miller-pr56}.

Finally and somewhat unexpectedly, 
\textsc{idrs} outperforms Anchor in terms of explanation precision.
Clearly, the precision of I0-relevant sets (i.e. $\delta$=0) is 100\%,
which demonstrates a significant improvement over anchors.
What is more interesting, however, is that the average precision of
\textsc{idrs}
does not significantly drop down in case of
$\delta\in\{\textrm{0.05}, \textrm{0.02}, \textrm{0.01}\}$.
In particular, 
its 
precision is on par with (or better than) the
explanations provided by Anchor.
%
%
All the points above confirm that I$\delta$-relevant sets if computed
for DT models provide a viable alternative to Anchor's explanations
from all the considered perspectives, including runtime performance,
explanation size, and precision.

\section{Conclusions}
\label{sec:conc}

$\delta$-relevant sets~\cite{kutyniok-jair21} enable the computation
of approximate (i.e.\ non-universally true) explanations, and reveal
connections between heuristic explanations and non-heuristic
explanations.
A major drawback of $\delta$-relevant sets is their computational complexity.
This paper shows that for DTs, deciding whether there exists a set of
at most $k$ features that $\delta$-relevant is in NP.
Furthermore, the paper proposes relaxed alternative definitions of 
$\delta$-relevant sets, and shows that such alternative definitions
enable the computation of minimal approximate explanations in
polynomial time.
The paper also derives a first result on the duality between
sets of features representing different kinds of (relaxed) $\delta$
relevant sets.
The experimental results, obtained on large DTs learned with a state
of the art tree learner, confirm the practical efficiency and the
quality of explanations when compared with the well-known Anchor
heuristic explainer~\cite{guestrin-aaai18}.

\paragraph{Acknowledgments.} This work was supported by the AI
Interdisciplinary Institute ANITI, funded by the French program
``Investing for the Future -- PIA3'' under Grant agreement
no.\ ANR-19-PI3A-0004, and by the H2020-ICT38 project COALA
``Cognitive Assisted agile manufacturing for a Labor force supported
by trustworthy Artificial intelligence''.


\bibliographystyle{abbrv}
\input{paper.bibl}

\appendix

\section*{Supplementary Material}

\subsection*{Proofs}

\paragraph{Deciding $\pmb{\delta}$-relevancy.}

\begin{definition}[$\majsat$\cite{papadimitriou-bk94}]
  Given a boolean function $f:\{0,1\}^n\to\{0,1\}$, the $\majsat$
  problem is to decide whether the number of points $\mbf{x}$ with 
  $f(\mbf{x})=1$ exceeds the number of points with $f(\mbf{x})=0$.
\end{definition}

It is well-known that $\majsat$ is
$\ppc$-complete~\cite{papadimitriou-bk94}.

\begin{proposition}
  Deciding whether a set $\fml{S}$ is a C$\delta$-relevant set is
  $\ppc$-hard.
\end{proposition}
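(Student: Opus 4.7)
The plan is to reduce $\majsat$ to the problem of deciding C$\delta$-relevancy. Given a boolean formula $f$ on variables $y_1,\ldots,y_n$, I would construct a classifier on $n+1$ inputs by padding with a fresh binary variable $z$ so that the instance required by the definition of a C$\delta$-relevant set is guaranteed to exist. Concretely, define $\kappa(y_1,\ldots,y_n,z) = \lnot z \lor f(y_1,\ldots,y_n)$, set $\mbf{v} = (0,\ldots,0,0)$ (so that $z = 0$ forces $\kappa(\mbf{v}) = 1 =: c$), choose the candidate set $\fml{S} = \emptyset$, and let $\delta = \tfrac{3}{4}$. The construction is clearly polynomial in the size of $f$.

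The next step is the correctness computation. With $\fml{S} = \emptyset$, the conditioning event $\mbf{x}_{\fml{S}} = \mbf{v}_{\fml{S}}$ is trivially satisfied, so $\fml{S}$ is C$\delta$-relevant iff $\prob_{\mbf{x}}(\kappa(\mbf{x}) = 1) \ge \tfrac{3}{4}$. A short count shows that the $2^{n+1}$ assignments to $(\mbf{y},z)$ split into the $2^n$ assignments with $z = 0$, all of which satisfy $\kappa$, and the $2^n$ assignments with $z = 1$, of which exactly $|f^{-1}(1)|$ satisfy $\kappa$. Hence $\prob_{\mbf{x}}(\kappa(\mbf{x}) = 1) = \tfrac{1}{2} + \tfrac{|f^{-1}(1)|}{2^{n+1}}$, and this is $\ge \tfrac{3}{4}$ precisely when $|f^{-1}(1)| \ge 2^{n-1}$, i.e.\ exactly the $\majsat$ condition for $f$.

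Combining these two steps, I get a polynomial-time many-one reduction from the $\ppc$-complete problem $\majsat$ to the decision problem of whether a given set is a C$\delta$-relevant set, yielding $\ppc$-hardness of the latter.

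The step I expect to require the most care is the padding construction: the definition of a C$\delta$-relevant set is only stated in the presence of an instance $(\mbf{v},c)$ with $\kappa(\mbf{v}) = c$, so a direct reduction that takes $\kappa := f$ breaks whenever $f$ is unsatisfiable on the chosen $\mbf{v}$. The fresh variable $z$ bypasses this, while its uniform contribution of $2^n$ additional satisfying points is easy to account for in the threshold $\delta$. Minor bookkeeping (e.g.\ adapting the constant $\tfrac{3}{4}$ if one uses the strict-majority formulation of $\majsat$ instead of the non-strict one) is straightforward.
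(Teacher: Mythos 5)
Your reduction is correct and is essentially the paper's own argument: both proofs reduce from $\majsat$ by padding $f$ with fresh variable(s) so that half of the (conditioned) space is automatically classified as $c$, giving precision $\tfrac{1}{2}+|f^{-1}(1)|/2^{n+1}$ compared against $\delta=\tfrac{3}{4}$. The only cosmetic differences are that you use a single padding variable and $\fml{S}=\emptyset$ whereas the paper pads with two variables $p_1,p_2$ and conditions on $\fml{S}=\{p_1\}$, and the strict-versus-non-strict majority threshold detail you flag is glossed over in the paper's sketch as well.
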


\begin{Proof}[Sketch]\\
  We reduce $\majsat$ to deciding C$\delta$-relevancy.\\
  Let $f:\{0,1\}^n\to\{0,1\}$ be a boolean function. The variables of
  $f$ are $X=\{x_1,\ldots,x_n\}$. We want to decide whether
  $\prob(f(\mbf{x})=1)>\prob(f(\mbf{x})=0)$.
  We create another function
  $F:\{0,1\}^{n}\times\{0,1\}^{2}\to\{0,1\}$, such that the variables
  of $F$ are $X$ and $P=\{p_1,p_2\}$. Moreover, $F$ is defined as
  follows:
  \[
  F(\mbf{x},\mbf{p})=\left\{
  \begin{array}{lcl}
    1 & \quad\quad & \tn{if $p_1=p_2=1$} \\
    f(\mbf{x}) & & \tn{otherwise} \\
  \end{array}\right.
  \]
  Set $(\mbf{x}_a,\mbf{p}_a)=((0,\ldots,0),(1,1))$. Clearly,
  $F(\mbf{x}_a,\mbf{p}_a)=1$.\\
  Moreover, set $\delta=0.75$ and pick $S=\{p_1\}$.\\
  Now, if
  $\prob(F(\mbf{x}_b,\mbf{p}_b)=1|(\mbf{x}_b,\mbf{p}_b)_{\fml{S}}=(\mbf{x}_a,\mbf{p}_a)_{\fml{S}})>\delta$
  iff
  the number of points with $f(\mbf{x})=1$ exceeds the number of
  points with $f(\mbf{x})=0$.
\end{Proof}




\end{document}